\documentclass[11pt]{article}

\usepackage[utf8]{inputenc}
\usepackage[T1]{fontenc}
\usepackage{lmodern}
\usepackage{microtype}
\usepackage{geometry}
\usepackage{amsmath,amssymb,amsthm,mathtools}
\usepackage{booktabs,tabularx,array}
\usepackage[round,authoryear]{natbib}
\usepackage[hidelinks]{hyperref}
\usepackage{xcolor}
\usepackage{listings}
\usepackage{tikz}
\usetikzlibrary{arrows.meta,positioning,shapes.geometric}

\lstdefinestyle{sftpython}{
  language=Python,
  basicstyle=\ttfamily\small,
  keywordstyle=\bfseries,
  commentstyle=\itshape\color{gray!70!black},
  stringstyle=\color{gray!60!black},
  showstringspaces=false,
  columns=fullflexible,
  breaklines=true,
  frame=single,
  rulecolor=\color{gray!40},
  numbers=left,
  numberstyle=\tiny\color{gray!70!black},
  xleftmargin=2em,
  framexleftmargin=1.5em
}

\newtheorem{definition}{Definition}
\newtheorem{proposition}{Proposition}

\newtheorem{remark}{Remark}

\DeclareMathOperator*{\argmin}{arg\,min}
\DeclareMathOperator{\softplus}{softplus}
\DeclareMathOperator{\diag}{diag}

\DeclareMathOperator{\pool}{pool}

\title{Semantic Field Theory:\\
Historical Origin, Higher-Order Interaction, and Stabilized Semantic Inference}

\author{%
Dimitris Vartziotis$^{1,2}$\thanks{Corresponding author: \texttt{dimitris.vartziotis@nikitec.gr}}\\
\small $^1$NIKI -- Digital Engineering, Ioannina, Greece\\
\small $^2$TWT Science \& Innovation, Stuttgart, Germany
}

\date{May 2026}

\begin{document}

\maketitle

\begin{abstract}
Semantic Field Theory (SFT) has developed from a philosophical critique of strong anti-formalist readings of language games into a proposed computational model class for lexical semantics, higher order composition, and stabilized interpretation. This paper reconstructs that evolution and gives SFT a sharper mathematical core suitable for independent evaluation in computational linguistics and representation learning. The central proposal is that a tractable level of linguistic organization can be modeled through lexical representations expressed as semantic fields, through contextual deformation of those fields, through interaction terms defined over subsets of tokens, and through stabilization governed by semantic energy dynamics. The paper contributes five formal elements. First, it defines a semantic field model as a tuple consisting of a semantic space, a lexical field lifting, a contextual deformation map, an interaction complex, and an interpretation functional. Second, it proves a Gaussian product closure result showing that multiplicative field interactions have explicit centers, precisions, and compatibility factors. Third, it generalizes the three-word problem by using Mobius inversion on the subset lattice to isolate irreducible semantic interactions of arbitrary order. Fourth, it introduces an order spectrum that measures how much field mass is explained at each interaction order. Fifth, it formulates stabilized interpretation as minimization of an energy functional associated with the sentence and gives existence, descent, and stability conditions. A small worked example shows how a three-word summer day triple can be represented by Gaussian semantic fields, implemented in Python, and summarized by a flow diagram. The result is not a completed theory of natural language meaning and does not replace social, pragmatic, or normative accounts of language. It is a mathematically explicit hypothesis about one representational level: how public language use may give rise to stable geometric, interactional, and dynamical regularities that can be estimated, ablated, and compared with transformer models.
\end{abstract}

\section{Introduction}

The mathematical study of linguistic meaning has passed through several regimes. The distributional hypothesis treated lexical meaning as a pattern of contextual occurrence \citep{harris1954,firth1957}. Vector-space semantics made that hypothesis geometrical and computational \citep{turney2010}. Compositional distributional semantics then asked how phrase and sentence meanings could be built from lexical representations \citep{mitchell2008,mitchell2010,coecke2010,grefenstette2011}. Neural embeddings made dense semantic geometry empirically powerful \citep{mikolov2013,pennington2014}. Contextual encoders and transformers shifted the unit of representation from a word type to a token embedded in an utterance \citep{peters2018,devlin2019,vaswani2017}. Large language models subsequently showed that broad linguistic regularities can be optimized, transferred, and used for general-purpose language behavior at scale \citep{kaplan2020,brown2020,bommasani2021}.

This development motivates a question that is both scientific and philosophical. Are mathematical structures merely convenient external representations of language, or do they reveal a stable level of organization generated by language use itself? A strong answer would be metaphysical and is unnecessary for the present paper. A weaker and scientifically useful answer is enough: public language use appears to induce regularities that can be represented as geometry, interaction, and dynamics. These regularities do not settle questions of reference, grounding, pragmatics, or normativity \citep{bender2020,bender2021}. They nevertheless form a legitimate object of computational theory.

Semantic Field Theory (SFT) is proposed as one such theory. From its earliest philosophical stages, SFT was guided by the intuition that lexical meaning behaves more like a distributed semantic field than a discrete symbolic atom or isolated vector representation. A lexical item should therefore not be represented only as a point, a symbol, or a single vector, but as a distributed field over a semantic space; an utterance should not be represented only as a sum of lexical items, but as an interaction field over subsets of tokens; and interpretation should not be treated only as retrieval of a static object, but as stabilization in a sentence-conditioned energy landscape. Earlier forms of this intuition were developed in philosophical work on Wittgenstein and language games \citep{vartziotis2012,vartziotis2017}. A later preprint formulated the explicit contrast between language-game anti-formalism and language as mathematical structure \citep{vartziotis2026a}. The present paper takes the next step: it reconstructs SFT as an evolving formal program and supplies a mathematically consolidated model class.

The contribution is not only historical. The paper introduces several technical elements that make SFT more mature as a computational hypothesis. First, it defines a \emph{semantic field model} as a tuple of maps and function spaces. Second, it replaces the merely metaphorical notion of field interaction by a Gaussian field algebra, including a closed-form product rule for interaction centers and compatibility factors. Third, it generalizes the three-word problem through Mobius inversion on the subset lattice. Fourth, it introduces an \emph{order spectrum}, a measurable profile of first-order, pairwise, third-order, and higher semantic residuals. Fifth, it defines a stabilized interpretation map by energy minimization and gives basic existence, descent, and perturbation stability results. A short worked example is included to make the formal machinery executable and inspectable in the simple case of a three-token utterance.

The intended status of the paper is therefore precise. SFT is not claimed to be a full semantics of natural language. It is a formal model class for a particular level of semantic organization: distributed lexical influence, context-dependent deformation, higher-order interaction, and stabilized inference. This level is compatible with use-theoretic and social accounts of meaning, but it is not exhausted by them. Its value should be judged by whether it yields estimable parameters, interpretable diagnostics, and discriminable empirical consequences.

\paragraph{Organization.}
Section~\ref{sec:history} reconstructs the historical development of SFT. Section~\ref{sec:positioning} clarifies the relation between language games, distributional structure, and formal semantics. Section~\ref{sec:model} defines the semantic field model. Section~\ref{sec:algebra} develops the Gaussian interaction algebra. Section~\ref{sec:example} gives a minimal computational example using a three-word summer-day triple. Section~\ref{sec:mobius} gives the residual theory of higher-order composition. Section~\ref{sec:stabilization} treats interpretation as energy minimization. Section~\ref{sec:transformers} relates SFT to transformer representations. Section~\ref{sec:evaluation} proposes empirical tests. Section~\ref{sec:limits} states limitations.

\section{Historical evolution of SFT}
\label{sec:history}

SFT did not originate as a response to current large language models. Its earliest motivation was philosophical: a resistance to the inference that because meaning is public, social, and use-governed, the search for inner semantic structure must be illegitimate. The development can be summarized as a sequence of increasingly explicit formulations.

\begin{table}[t]
\centering
\small
\begin{tabularx}{\linewidth}{@{}p{0.17\linewidth}p{0.29\linewidth}X@{}}
\toprule
Stage & Main source & Role in the evolution of SFT \\
\midrule
Philosophical origin & \citet{vartziotis2012} & Critique of restrictive readings of Wittgenstein and initial reflections on semantic organization beyond local use descriptions. \\
Intermediate formulation & \citet{vartziotis2017} & German commentary stage in which the opposition between language games and structural semantic regularity becomes more explicit. \\
Language-game confrontation & \citet{vartziotis2026a} & Explicit framing of SFT against strong anti-formalism, with LLMs used as evidence that language use leaves tractable mathematical structure. \\
Philosophical-semantic synthesis & In press: \emph{Wittgenstein and the End of Language Games} & Development of SFT as a mathematical semantics framework connected to predictive language models and post-language-game philosophy. \\
LLM-era philosophical expansion & In press: \emph{The Revolution of LLMs} & Exploration of the philosophical and mathematical implications of large language models and their relation to semantic-field structures. \\
Current formal program & Present paper & Mathematical consolidation: field lifting, contextual deformation, interaction complexes, Mobius residuals, order spectra, and energy-based stabilization. \\
\bottomrule
\end{tabularx}
\caption{A compact reconstruction of the historical development of SFT. The earlier works are treated here as conceptual antecedents, not as complete technical implementations of the present model.}
\label{tab:evolution}
\end{table}

The first stage is a philosophical orientation. Later Wittgenstein emphasizes language as public practice, rule-following, and participation in language games \citep{wittgenstein1953}. This remains a necessary constraint on any account of meaning. However, the stronger claim that this public character prohibits formal internal structure does not follow. The early SFT intuition was that use and structure should be separated analytically: use supplies the public ground of meaning, while repeated use can also produce stable patterns that are mathematically articulable.

Already in the earlier philosophical stages of SFT, a distinction emerged between lexical fields and linguistic fields, although not yet in formal mathematical language. Lexical fields correspond to distributed semantic potential associated with word types. Linguistic fields correspond to utterance-level interaction systems produced when lexical fields are activated together. This distinction is central because it prevents SFT from collapsing into one-vector-per-word semantics. It also prevents the theory from identifying sentence meaning with simple aggregation.

The third stage is the LLM-era reinterpretation. Transformer-based systems do not prove that language is reducible to vectors, nor do they solve the philosophical problem of meaning. They do, however, demonstrate that large-scale language use contains stable, learnable, high-dimensional regularities. This observation weakens strong anti-structural prohibitions and motivates a more detailed formal account. This broader reinterpretation is further developed in two books currently in press at Literareon -- Utz Verlag by the author: \emph{Wittgenstein and the End of Language Games: Mathematical Semantics, Field Theory, and the Age of Predictive Language Models}, and \emph{The Revolution of LLMs: Philosophical and Mathematical Conjugations}. These works extend the philosophical implications of SFT beyond the present computational formulation and examine how predictive language models reshape traditional debates concerning meaning, structure, and linguistic representation.

The present paper is the fourth stage. It treats SFT not as a slogan but as a family of estimable mathematical models. The historical claim is modest: SFT began as a philosophical critique and is now reformulated as a computational hypothesis. The scientific claim is also modest: if the hypothesis is meaningful, it should generate quantities that can be estimated and ablated, such as field overlap, residual interaction mass, energy gaps, and stabilization trajectories.

\section{Use, structure, and levels of explanation}
\label{sec:positioning}

A recurring source of confusion is the phrase ``inner structure''. If it means private mental objects that determine meaning independently of public criteria, then SFT does not require it. If it means mathematically describable regularities induced by public language use, then rejecting it is unnecessary and scientifically costly.

We therefore distinguish four levels.

\begin{enumerate}
\item \textbf{Normative-pragmatic level.} Meaning is learned, corrected, and stabilized in public linguistic practice. This is the level emphasized by language-game approaches \citep{wittgenstein1953,kripke1982}.
\item \textbf{Distributional level.} Repeated use leaves statistical regularities in contexts, co-occurrences, substitutions, and entailment patterns \citep{harris1954,firth1957,turney2010}.
\item \textbf{Geometric-field level.} Distributional regularities can be represented as fields, distances, overlaps, displacements, and interaction structures in a semantic space.
\item \textbf{Dynamical-computational level.} Interpretation can be modeled as a process that evolves toward stable states under a sentence-conditioned field.
\end{enumerate}

SFT is a theory of the third and fourth levels. It is constrained by the first and informed by the second. This layered view is important for scientific maturity because it avoids two reductions. It avoids reducing meaning to social practice alone, and it avoids reducing meaning to hidden vectors alone. The object of SFT is narrower: the structured representational regularities through which lexical and utterance-level meanings can be approximated, composed, and dynamically stabilized.

\section{Semantic Field Theory as a model class}
\label{sec:model}

Let $\mathbf{w}=(w_1,\ldots,w_m)$ be a token sequence and let $[m]=\{1,\ldots,m\}$. Let $S\subseteq \mathbb{R}^n$ be a semantic space with Euclidean norm $\|\cdot\|$. The space is not an ontological container of meanings; it is a formal domain for semantic proximity, deformation, interaction, and dynamics.

\begin{definition}[Semantic field model]
A semantic field model of order $p$ is a tuple
\[
\mathfrak{M}_p=(S,\mathcal{V},\ell_\Theta,D_\Theta,\mathcal{K}_p,\Lambda_\Theta,U_\Theta),
\]
where $S\subseteq\mathbb{R}^n$ is a semantic space, $\mathcal{V}$ is a vocabulary, $\ell_\Theta$ maps lexical types to base fields, $D_\Theta$ maps base fields to contextually deformed token fields, $\mathcal{K}_p(\mathbf{w})$ is an interaction complex containing subsets of $[m]$ of size at most $p$, $\Lambda_\Theta$ assigns sentence-conditioned interaction coefficients, and $U_\Theta(\cdot\mid\mathbf{w})$ is an interpretation energy.
\end{definition}

This definition separates five roles that were often implicit in earlier discussions: lexical structure, contextual deformation, interaction selection, interaction weighting, and interpretation. Each role may be instantiated by symbolic, neural, probabilistic, or hybrid mechanisms.

\subsection{Lexical field lifting}

Let each vocabulary type $t\in\mathcal{V}$ have a base embedding $z_t\in\mathbb{R}^{d_z}$. A lexical lifting map sends $z_t$ to field parameters
\begin{align}
 c_t &= W_c z_t, \\
 a_t &= \softplus(w_a^\top z_t+b_a), \\
 \Sigma_t &= \diag\big(\softplus(W_\Sigma z_t)+\delta\mathbf{1}\big), \qquad \delta>0.
\end{align}
For a token occurrence $w_i=t_i$, the base lexical field is
\begin{equation}
L_i(x)=a_{t_i}\exp\!\left[-\frac12(x-c_{t_i})^\top\Sigma_{t_i}^{-1}(x-c_{t_i})\right],\qquad x\in S.
\label{eq:base-lexical-field}
\end{equation}
A Gaussian field is not required by the theory, but it gives a useful base case: centers encode dominant regions, covariances encode semantic spread and anisotropy, and amplitudes encode field strength.

\subsection{Contextual deformation}

Type-level fields must be distinguished from token-level fields. Let $q_\phi(\mathbf{w})$ be a contextual sentence representation, possibly produced by a transformer encoder, recurrent model, syntactic encoder, or task-specific context map. A contextually deformed token field is
\begin{equation}
\widetilde{L}_i(x\mid\mathbf{w})=\tilde a_i(\mathbf{w})\exp\!\left[-\frac12(x-\tilde c_i(\mathbf{w}))^\top\widetilde\Sigma_i(\mathbf{w})^{-1}(x-\tilde c_i(\mathbf{w}))\right],
\label{eq:deformed-field}
\end{equation}
with
\begin{align}
\tilde a_i(\mathbf{w}) &= a_{t_i}\,\gamma_i(\mathbf{w}),\qquad \gamma_i(\mathbf{w})>0, \\
\tilde c_i(\mathbf{w}) &= c_{t_i}+d_i(\mathbf{w}), \\
\widetilde\Sigma_i(\mathbf{w}) &= \Sigma_{t_i}+R_i(\mathbf{w}), \qquad \widetilde\Sigma_i(\mathbf{w})\succ0.
\end{align}
The simplest version sets $d_i=0$ and $R_i=0$ and uses only a non-negative gate $\gamma_i$. A richer version permits contextual displacement and covariance deformation. This distinction is conceptually important: context need not erase lexical identity; it can deform lexical influence.

A concrete neural parameterization is
\begin{align}
\gamma_i(\mathbf{w}) &= \softplus\!\left(u_g^\top\tanh(W_g[z_{t_i};q_\phi(\mathbf{w});e_i])+b_g\right), \\
d_i(\mathbf{w}) &= W_d\tanh(W_{dc}[z_{t_i};q_\phi(\mathbf{w});e_i]),
\end{align}
where $e_i$ may include positional or syntactic features. This makes the model order-sensitive even when interaction products are symmetric in their arguments.

\subsection{Interaction complexes}

The full powerset of $[m]$ is usually computationally impossible. SFT therefore separates the formal definition from the selected interaction structure.

\begin{definition}[Interaction complex]
For a sentence $\mathbf{w}$ and order $p$, an interaction complex $\mathcal{K}_p(\mathbf{w})$ is a family of non-empty subsets $A\subseteq[m]$ such that $|A|\le p$. It is downward closed if $A\in\mathcal{K}_p(\mathbf{w})$ and $\emptyset\neq B\subseteq A$ imply $B\in\mathcal{K}_p(\mathbf{w})$.
\end{definition}

A downward-closed complex is natural when residual interactions are interpreted hierarchically. In practice, $\mathcal{K}_p$ can be selected by local windows, syntactic neighborhoods, top-$k$ contextual similarity, dependency paths, or learned sparsity.

For each $A\in\mathcal{K}_p(\mathbf{w})$, define
\begin{equation}
\Phi_A(x\mid\mathbf{w})=\lambda_A(\mathbf{w})\prod_{i\in A}\widetilde L_i(x\mid\mathbf{w}),
\label{eq:interaction-field}
\end{equation}
where $\lambda_A(\mathbf{w})\in\mathbb{R}$ is an interaction coefficient. The linguistic field of order $p$ is
\begin{equation}
\mathcal{L}_p(x\mid\mathbf{w})=\sum_{A\in\mathcal{K}_p(\mathbf{w})}\Phi_A(x\mid\mathbf{w}).
\label{eq:linguistic-field}
\end{equation}
When $|A|=1$, one may set $\lambda_A=1$ or absorb singleton weights into $\widetilde L_i$. Positive coefficients reinforce joint activation; negative coefficients suppress it.

A sentence-conditioned coefficient can be parameterized by
\begin{equation}
\lambda_A(\mathbf{w})=v_{|A|}^\top \tanh\!\left(W_{|A|}[\pool(z_{t_i},e_i:i\in A);q_\phi(\mathbf{w})]\right)+b_{|A|},
\label{eq:lambda-param}
\end{equation}
where $\pool$ is symmetric if order information is carried elsewhere, or order-sensitive if the subset is represented as an ordered tuple.

\section{Gaussian field algebra}
\label{sec:algebra}

A main advantage of Gaussian lexical fields is that multiplicative interactions remain analytically tractable. This gives SFT a concrete geometry of composition rather than a purely verbal field metaphor.

For $i\in A$, write
\[
\widetilde L_i(x\mid\mathbf{w})=\tilde a_i \exp\!\left[-\frac12(x-\tilde c_i)^\top \widetilde\Sigma_i^{-1}(x-\tilde c_i)\right],
\]
where the dependence on $\mathbf{w}$ is suppressed for readability. Let $P_i=\widetilde\Sigma_i^{-1}$.

\begin{proposition}[Gaussian product closure]
For any non-empty $A\subseteq[m]$, define
\begin{align}
P_A &= \sum_{i\in A}P_i, \\
\Sigma_A &= P_A^{-1}, \\
\mu_A &= \Sigma_A\sum_{i\in A}P_i\tilde c_i, \\
\kappa_A &= \exp\!\left[-\frac12\left(\sum_{i\in A}\tilde c_i^\top P_i\tilde c_i-\mu_A^\top P_A\mu_A\right)\right].
\end{align}
Then
\begin{equation}
\prod_{i\in A}\widetilde L_i(x\mid\mathbf{w})
=\left(\prod_{i\in A}\tilde a_i\right)\kappa_A
\exp\!\left[-\frac12(x-\mu_A)^\top P_A(x-\mu_A)\right].
\label{eq:gaussian-product}
\end{equation}
\end{proposition}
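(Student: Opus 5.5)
The plan is to work entirely in the exponent. Writing each deformed field as $\tilde a_i\exp[-\tfrac12 Q_i(x)]$ with $Q_i(x)=(x-\tilde c_i)^\top P_i(x-\tilde c_i)$, the product over $A$ becomes $\bigl(\prod_{i\in A}\tilde a_i\bigr)\exp[-\tfrac12\sum_{i\in A}Q_i(x)]$. The amplitude factor is already as claimed, so the proposition reduces to the purely algebraic identity
\[
\sum_{i\in A}Q_i(x)=(x-\mu_A)^\top P_A(x-\mu_A)+\sum_{i\in A}\tilde c_i^\top P_i\tilde c_i-\mu_A^\top P_A\mu_A ,
\]
valid for all $x\in S$. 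Once this holds, exponentiating and splitting off the $x$-independent part gives exactly the factor $\kappa_A$ and the stated form \eqref{eq:gaussian-product}.

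Before expanding anything, I will check that $\Sigma_A$ and $\mu_A$ are well defined. Each $\widetilde\Sigma_i\succ0$ by the definition of contextual deformation, so each $P_i$ is symmetric positive definite. Since $A$ is non-empty, $P_A=\sum_{i\in A}P_i$ is a non-empty sum of positive definite matrices, hence symmetric positive definite and invertible. This is the only point where the hypotheses (non-emptiness of $A$ and $\widetilde\Sigma_i\succ0$) are actually used.

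For the identity itself, I will expand each $Q_i$ using the symmetry of $P_i$:
\[
Q_i(x)=x^\top P_ix-2x^\top P_i\tilde c_i+\tilde c_i^\top P_i\tilde c_i .
\]
Summing over $A$ gives $x^\top P_Ax-2x^\top b_A+\sum_{i\in A}\tilde c_i^\top P_i\tilde c_i$, where $b_A=\sum_{i\in A}P_i\tilde c_i=P_A\mu_A$. Completing the square, $x^\top P_Ax-2x^\top P_A\mu_A=(x-\mu_A)^\top P_A(x-\mu_A)-\mu_A^\top P_A\mu_A$, which again uses the symmetry of $P_A$.

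The only step requiring any care is keeping the sign and the factor $\tfrac12$ consistent when the constant $\sum_{i\in A}\tilde c_i^\top P_i\tilde c_i-\mu_A^\top P_A\mu_A$ is moved into $\kappa_A$. As a sanity check, I will note that this constant is nonnegative. It equals the minimum over $x$ of $\sum_{i\in A}Q_i(x)$, attained at $\mu_A$, so $0<\kappa_A\le1$. This is consistent with reading $\kappa_A$ as a compatibility factor, and it confirms the sign convention. No earlier result of the paper is needed beyond the definitions.
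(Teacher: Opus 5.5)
Your proposal is correct and follows the same route as the paper. Both expand the summed exponent $\sum_{i\in A}(x-\tilde c_i)^\top P_i(x-\tilde c_i)$, rewrite its linear term as $-2x^\top P_A\mu_A$, and complete the square. Your extra checks, that $P_A$ is invertible and that $\sum_{i\in A}\tilde c_i^\top P_i\tilde c_i-\mu_A^\top P_A\mu_A\ge 0$ (so $0<\kappa_A\le 1$), supply points the paper leaves implicit; the second is exactly the inequality $T_A\ge0$ that the paper states without proof when it defines the tension.
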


\begin{proof}
Expand the exponent:
\[
\sum_{i\in A}(x-\tilde c_i)^\top P_i(x-\tilde c_i)
=x^\top P_A x-2x^\top\sum_{i\in A}P_i\tilde c_i+\sum_{i\in A}\tilde c_i^\top P_i\tilde c_i.
\]
Since $\mu_A=P_A^{-1}\sum_{i\in A}P_i\tilde c_i$, completing the square gives
\[
x^\top P_A x-2x^\top P_A\mu_A
=(x-\mu_A)^\top P_A(x-\mu_A)-\mu_A^\top P_A\mu_A.
\]
Substitution yields \eqref{eq:gaussian-product}.
\end{proof}

The term $\mu_A$ is the \emph{interaction focus}. It is a precision-weighted semantic location jointly supported by the fields in $A$. The matrix $P_A$ is the \emph{interaction precision}: adding fields sharpens the product. The scalar $\kappa_A$ is a \emph{compatibility factor}. It decreases when centers are mutually distant relative to their covariances and increases when the fields overlap. Thus, in the Gaussian case, SFT yields explicit quantities for lexical compatibility and compositional focus.

\begin{remark}[Closed-form field overlap]
For two Gaussian fields, the overlap integral $\int_{\mathbb{R}^n}L_i(x)L_j(x)\,dx$ is proportional to the compatibility factor for $A=\{i,j\}$ and includes the normalizing term $(2\pi)^{n/2}|P_i+P_j|^{-1/2}$. This quantity can be used as a field-theoretic similarity measure distinct from cosine similarity between centers.
\end{remark}

\subsection{Interaction tension}

The exponent in $\kappa_A$ defines a natural measure of semantic tension:
\begin{equation}
T_A=\sum_{i\in A}\tilde c_i^\top P_i\tilde c_i-\mu_A^\top P_A\mu_A\ge0.
\label{eq:tension}
\end{equation}
For $|A|=2$, $T_A$ reduces to a Mahalanobis-type discrepancy between the two centers under the combined covariance. For larger $A$, it measures how difficult it is for all fields in $A$ to share a common focus. A high-tension triple with a non-zero positive coefficient $\lambda_A$ is a natural formal candidate for metaphor, coercion, or non-literal construal: the fields are not simply close, yet the sentence activates a joint focus.

\section{Minimal computational example: a summer-day triple}
\label{sec:example}

This section gives a deliberately small, transparent instantiation of the preceding Gaussian algebra. The example is not intended as an empirical validation of SFT. It is a didactic bridge from the formal definitions to executable code. Consider the three-token summer-day triple
\[
\mathbf{w}=(w_1,w_2,w_3)=(\text{helios},\text{thalassa},\text{zesti}),
\]
where the transliterated Greek words correspond to \emph{sun}, \emph{sea}, and \emph{heat}. Let the semantic space be $S=\mathbb{R}^3$, with coordinates interpreted as summer salience, natural/marine salience, and thermal intensity. Assign toy centers
\begin{align}
 c_1 &= (0.95,0.70,0.90)^\top, \\
 c_2 &= (0.85,0.95,0.45)^\top, \\
 c_3 &= (0.90,0.45,0.98)^\top,
\end{align}
and use the simplest isotropic fields
\begin{equation}
P_i=I_3,\qquad a_i=1,\qquad i=1,2,3.
\end{equation}
Thus each lexical field has the form
\begin{equation}
L_i(x)=\exp\!\left[-\frac12\|x-c_i\|^2\right].
\end{equation}
For $A=\{1,2,3\}$, the SFT interaction quantities become
\begin{align}
P_A &= 3I_3, \\
\Sigma_A &= \frac13 I_3, \\
\mu_A &= \frac13(c_1+c_2+c_3)=(0.900,0.700,0.777)^\top, \\
T_A &= \sum_{i=1}^3\|c_i\|^2-3\|\mu_A\|^2 \approx 0.293, \\
\kappa_A &= \exp(-T_A/2)\approx 0.864.
\end{align}
In this toy setting, the compatibility is high because the three fields jointly support a coherent focus: a hot summer day near the sea. The second coordinate is pulled upward by \emph{thalassa}, the third coordinate is pulled upward by \emph{helios} and \emph{zesti}, and the first coordinate remains high across all three terms. This illustrates the central SFT idea that the utterance-level focus is not one lexical point, but a stabilized region produced by field interaction.

\subsection{Executable prototype}

Listing~\ref{lst:summer-triple} implements the same calculation. The centers are hand-coded only to make the algebra visible; in an empirical implementation they would be learned from data or induced from an encoder.

\begin{lstlisting}[style=sftpython,caption={Minimal SFT computation for a summer-day triple.},label={lst:summer-triple}]
import numpy as np
from itertools import combinations

# Semantic Field Theory: minimal Gaussian triple example
# Tokens: helios = sun, thalassa = sea, zesti = heat
words = ["helios", "thalassa", "zesti"]

# Toy semantic dimensions:
# [summer salience, natural/marine salience, thermal intensity]
centers = {
    "helios":   np.array([0.95, 0.70, 0.90]),
    "thalassa": np.array([0.85, 0.95, 0.45]),
    "zesti":    np.array([0.90, 0.45, 0.98]),
}

fields = []
for word in words:
    fields.append({
        "word": word,
        "center": centers[word],
        "precision": np.eye(3),
        "amplitude": 1.0,
    })


def sft_interaction(selected_fields):
    """Gaussian product interaction according to SFT."""
    P_A = sum(f["precision"] for f in selected_fields)
    Sigma_A = np.linalg.inv(P_A)

    weighted_sum = sum(
        f["precision"] @ f["center"] for f in selected_fields
    )
    mu_A = Sigma_A @ weighted_sum

    tension = sum(
        f["center"].T @ f["precision"] @ f["center"]
        for f in selected_fields
    )
    tension -= mu_A.T @ P_A @ mu_A

    compatibility = np.exp(-0.5 * tension)
    return mu_A, float(tension), float(compatibility)


print("Pairwise interactions")
for pair in combinations(fields, 2):
    mu, tension, compatibility = sft_interaction(pair)
    print([f["word"] for f in pair])
    print("  focus mu       =", np.round(mu, 3))
    print("  tension T      =", round(tension, 4))
    print("  compatibility =", round(compatibility, 4))

print("\nTriple interaction")
mu, tension, compatibility = sft_interaction(fields)
print(words)
print("  focus mu_123       =", np.round(mu, 3))
print("  tension T_123      =", round(tension, 4))
print("  compatibility k_123=", round(compatibility, 4))

if compatibility > 0.8:
    print("Interpretation: coherent summer-day semantic field.")
elif compatibility > 0.5:
    print("Interpretation: related fields with moderate tension.")
else:
    print("Interpretation: weak, unstable, or metaphorical interaction.")
\end{lstlisting}

\subsection{Flussdiagramm}

Figure~\ref{fig:summer-flow} summarizes the computational flow. The diagram is intentionally close to the mathematical pipeline: lexical items are lifted to field centers, Gaussian fields are constructed, subset interactions are evaluated, and the triple-level focus and diagnostics are interpreted.

\begin{figure}[t]
\centering
\tikzset{
  sftblock/.style={draw, rounded corners, align=center, text width=0.72\linewidth, minimum height=0.9cm},
  sftarrow/.style={-{Latex[length=2mm]}, thick}
}
\begin{tikzpicture}[node distance=0.55cm]
\node[sftblock] (input) {Input triple: \emph{helios}, \emph{thalassa}, \emph{zesti}\\(sun, sea, heat)};
\node[sftblock, below=of input] (vectors) {Assign or learn semantic centers $c_i\in\mathbb{R}^3$};
\node[sftblock, below=of vectors] (fields) {Construct Gaussian lexical fields $L_i(x)=\exp[-\frac12\|x-c_i\|^2]$};
\node[sftblock, below=of fields] (pairs) {Compute pairwise interactions for $\{1,2\}$, $\{1,3\}$, $\{2,3\}$};
\node[sftblock, below=of pairs] (triple) {Compute triple interaction for $A=\{1,2,3\}$};
\node[sftblock, below=of triple] (diagnostics) {Return $\mu_A$, $T_A$, and $\kappa_A$};
\node[sftblock, below=of diagnostics] (interpretation) {Interpretation: coherent summer-day semantic field};

\draw[sftarrow] (input) -- (vectors);
\draw[sftarrow] (vectors) -- (fields);
\draw[sftarrow] (fields) -- (pairs);
\draw[sftarrow] (pairs) -- (triple);
\draw[sftarrow] (triple) -- (diagnostics);
\draw[sftarrow] (diagnostics) -- (interpretation);
\end{tikzpicture}
\caption{Flussdiagramm for the minimal SFT summer-day example.}
\label{fig:summer-flow}
\end{figure}

\section{Residual interaction and the generalized three-word problem}
\label{sec:mobius}

The three-word problem states that some semantic effects are not reconstructible from singleton and pairwise effects. The mature form of this idea is not limited to triples. It is a general residual theory over the subset lattice.

For any non-empty $A\subseteq[m]$, let $\mathcal{L}_A(x\mid\mathbf{w})$ denote the linguistic field induced by the subsequence or subconfiguration indexed by $A$. Define $\mathcal{L}_\emptyset\equiv0$. The residual interaction field is
\begin{equation}
\Delta_A(x\mid\mathbf{w})=\sum_{B\subseteq A}(-1)^{|A|-|B|}\mathcal{L}_B(x\mid\mathbf{w}).
\label{eq:mobius-residual}
\end{equation}
This is Mobius inversion on the Boolean lattice \citep{rota1964}.

\begin{proposition}[Residual identification]
Assume the hierarchical expansion
\begin{equation}
\mathcal{L}_A(x\mid\mathbf{w})=\sum_{\emptyset\neq B\subseteq A}\Phi_B(x\mid\mathbf{w}).
\label{eq:hierarchical-expansion}
\end{equation}
Then for every non-empty $A\subseteq[m]$,
\begin{equation}
\Delta_A(x\mid\mathbf{w})=\Phi_A(x\mid\mathbf{w}).
\end{equation}
Conversely,
\begin{equation}
\mathcal{L}_A(x\mid\mathbf{w})=\sum_{\emptyset\neq B\subseteq A}\Delta_B(x\mid\mathbf{w}).
\end{equation}
\end{proposition}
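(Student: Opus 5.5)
The plan is to prove this by the standard Möbius inversion on the Boolean lattice, carried out by exchanging the order of summation and using the vanishing of alternating binomial sums. Throughout, $x$ and $\mathbf{w}$ are fixed and only the index sets vary. The convention $\mathcal{L}_\emptyset\equiv0$ is consistent with the hierarchical expansion \eqref{eq:hierarchical-expansion}, since that sum is empty when $A=\emptyset$. This lets us include $B=\emptyset$ freely in every sum.

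\textbf{Forward direction.} Substitute \eqref{eq:hierarchical-expansion} into \eqref{eq:mobius-residual}:
\[
\Delta_A=\sum_{B\subseteq A}(-1)^{|A|-|B|}\sum_{\emptyset\neq C\subseteq B}\Phi_C
=\sum_{\emptyset\neq C\subseteq A}\Phi_C\sum_{C\subseteq B\subseteq A}(-1)^{|A|-|B|}.
\]
Next, parametrize the inner sum by $D=A\setminus B$, which ranges over subsets of $A\setminus C$. The inner sum becomes $\sum_{D\subseteq A\setminus C}(-1)^{|D|}=(1-1)^{|A\setminus C|}$. This equals $1$ if $C=A$ and $0$ otherwise. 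Only the term $C=A$ survives, so $\Delta_A=\Phi_A$.

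\textbf{Converse.} Once $\Delta_B=\Phi_B$ holds for every non-empty $B\subseteq A$ (apply the forward part to each such $B$), the identity
\[
\mathcal{L}_A=\sum_{\emptyset\neq B\subseteq A}\Delta_B
\]
is just \eqref{eq:hierarchical-expansion} rewritten.

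It is also worth recording a hypothesis-free version, to show that the converse is genuinely the inverse transform. Sum \eqref{eq:mobius-residual} over $B\subseteq A$, swap the sums, and use the same alternating-sum lemma. This gives $\sum_{B\subseteq A}\Delta_B=\mathcal{L}_A$. Since $\Delta_\emptyset=\mathcal{L}_\emptyset=0$, the term $B=\emptyset$ can be dropped.

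\textbf{Main obstacle.} The only real care needed is the bookkeeping of the empty set and of the index range $C\subseteq B\subseteq A$ when exchanging the sums. The key combinatorial fact is the identity $\sum_{D\subseteq E}(-1)^{|D|}=[E=\emptyset]$, which I would state and prove once via the binomial theorem and then reuse in both directions. Everything else is finite linear algebra, pointwise in $x$, so no convergence issues arise.
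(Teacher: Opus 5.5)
Your proof is correct and follows the paper's argument. You substitute the hierarchical expansion into the Möbius residual, exchange the sums, and eliminate every $C\subsetneq A$ with the alternating binomial identity $(1-1)^{|A\setminus C|}=0$; indexing by $D=A\setminus B$ instead of the paper's $B=C\cup D$ is the same computation, and your explicit arguments for the converse (direct rewriting under the hypothesis, and the hypothesis-free summation using $\Delta_\emptyset=0$) spell out what the paper only cites as standard Möbius inversion.
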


\begin{proof}
Substituting \eqref{eq:hierarchical-expansion} into \eqref{eq:mobius-residual} gives
\[
\Delta_A=\sum_{B\subseteq A}(-1)^{|A|-|B|}\sum_{\emptyset\neq C\subseteq B}\Phi_C
=\sum_{\emptyset\neq C\subseteq A}\Phi_C\sum_{B:C\subseteq B\subseteq A}(-1)^{|A|-|B|}.
\]
For $C=A$ the inner sum equals $1$. For $C\subsetneq A$, write $B=C\cup D$ with $D\subseteq A\setminus C$. The inner sum becomes
\[
\sum_{D\subseteq A\setminus C}(-1)^{|A|-|C|-|D|}=(1-1)^{|A|-|C|}=0.
\]
Thus only $\Phi_A$ survives. The inverse formula is the standard inverse of Mobius inversion on the Boolean lattice.
\end{proof}

For $A=\{i,j,k\}$ this gives
\begin{align}
\Delta_{ijk}
&=\mathcal{L}_{ijk}-\mathcal{L}_{ij}-\mathcal{L}_{ik}-\mathcal{L}_{jk}
+\mathcal{L}_i+\mathcal{L}_j+\mathcal{L}_k,
\label{eq:three-word-residual}
\end{align}
with arguments suppressed. The triple residual is the first residual that cannot be reduced to isolated words or pairwise compatibility. This is the precise mathematical form of the three-word problem.

\subsection{Order spectra}

Residuals become empirically useful when equipped with norms. Let $\|f\|_{\mathcal{H}}$ be a chosen function norm, such as an $L^2$ norm over $S$, an empirical norm over sampled semantic states, or a reproducing-kernel norm. Define the order-$k$ residual mass
\begin{equation}
R_k(\mathbf{w})=\left(\sum_{\substack{A\in\mathcal{K}_p(\mathbf{w})\\ |A|=k}}\|\Delta_A(\cdot\mid\mathbf{w})\|_{\mathcal{H}}^2\right)^{1/2}.
\label{eq:order-mass}
\end{equation}
The vector
\begin{equation}
\operatorname{Spec}_p(\mathbf{w})=(R_1(\mathbf{w}),\ldots,R_p(\mathbf{w}))
\label{eq:order-spectrum}
\end{equation}
is the \emph{semantic order spectrum} of the sentence under the model.

The order spectrum is a new diagnostic object. Additive semantic models predict concentration at $R_1$. Pairwise compositional models predict that most residual mass lies in $R_1$ and $R_2$. SFT predicts that idioms, metaphors, coercions, and event construals should produce systematically higher $R_3$ or higher-order mass than literal compositional controls, after controlling for sentence length and lexical frequency.

\section{Interpretation as stabilized energy minimization}
\label{sec:stabilization}

Given a linguistic field $\mathcal{L}_p(\cdot\mid\mathbf{w})$, SFT defines interpretation as stabilization in an energy landscape. Let
\begin{equation}
U(x\mid\mathbf{w})=\frac{\beta}{2}\|x\|^2-\mathcal{L}_p(x\mid\mathbf{w}),\qquad \beta>0.
\label{eq:energy}
\end{equation}
A stabilized interpretation is a local or global minimizer of $U$:
\begin{equation}
x^*(\mathbf{w})\in\argmin_{x\in S}U(x\mid\mathbf{w}).
\label{eq:stabilized-state}
\end{equation}
The quadratic term prevents unbounded drift and can be replaced by a more structured prior when $S$ has known geometry.

\begin{proposition}[Existence]
Assume $S=\mathbb{R}^n$, $p<\infty$, $\mathcal{K}_p(\mathbf{w})$ is finite, and the deformed lexical fields are bounded and continuous. Then $U(\cdot\mid\mathbf{w})$ is coercive and attains a global minimum.
\end{proposition}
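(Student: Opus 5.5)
The approach is the direct one: show that $\mathcal{L}_p(\cdot\mid\mathbf{w})$ is bounded and continuous on $\mathbb{R}^n$, deduce coercivity from the quadratic term, and conclude with a Weierstrass argument on a compact sublevel set.

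\textbf{Step 1: $\mathcal{L}_p$ is bounded and continuous.} Let $M_i=\sup_x|\widetilde L_i(x\mid\mathbf{w})|<\infty$, which is finite by the boundedness hypothesis. For each $A\in\mathcal{K}_p(\mathbf{w})$, the field $\Phi_A$ in \eqref{eq:interaction-field} is a real scalar $\lambda_A(\mathbf{w})$ times a finite product of bounded continuous functions. Hence $\Phi_A$ is continuous with $|\Phi_A|\le|\lambda_A|\prod_{i\in A}M_i$. Since $\mathcal{K}_p(\mathbf{w})$ is finite, the sum \eqref{eq:linguistic-field} is continuous and satisfies $|\mathcal{L}_p(x\mid\mathbf{w})|\le C$, where
\[
C=\sum_{A\in\mathcal{K}_p(\mathbf{w})}|\lambda_A(\mathbf{w})|\prod_{i\in A}M_i<\infty.
\]
In the Gaussian case of \eqref{eq:deformed-field} these hypotheses hold automatically, with $M_i=\tilde a_i$.

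\textbf{Step 2: coercivity.} From \eqref{eq:energy}, $U(x\mid\mathbf{w})\ge\frac{\beta}{2}\|x\|^2-C$ because $\beta>0$. This lower bound tends to $+\infty$ as $\|x\|\to\infty$, so $U$ is coercive. The bound also shows $U\ge -C$, so $U$ is bounded below.

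\textbf{Step 3: attainment.} Let $U_0=U(0\mid\mathbf{w})\le C$, noting that $|\mathcal{L}_p(0)|\le C$. The sublevel set $K=\{x:U(x\mid\mathbf{w})\le U_0\}$ is closed by continuity and nonempty since it contains $0$. It is also bounded, because $x\in K$ implies $\frac{\beta}{2}\|x\|^2\le U_0+C\le 2C$, i.e.\ $\|x\|\le 2\sqrt{C/\beta}$. So $K$ is compact, and $U$ attains a minimum on $K$ at some $x^*$ by Weierstrass. For $x\notin K$ we have $U(x)>U_0\ge U(x^*)$, so $x^*$ is a global minimizer, as required by \eqref{eq:stabilized-state}.

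The only point needing care, and the main potential obstacle, is that the $\lambda_A$ may be negative, so $\mathcal{L}_p$ need not be nonnegative. This is why the argument uses an absolute bound $|\mathcal{L}_p|\le C$ rather than a one-sided one. It also relies on the $\lambda_A(\mathbf{w})$ being finite real numbers for fixed $\mathbf{w}$, which the definition guarantees.
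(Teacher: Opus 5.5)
Your proof is correct and follows the paper's route. The paper argues that $\mathcal{L}_p$ is bounded and continuous because $\mathcal{K}_p(\mathbf{w})$ is finite, that the quadratic term $\frac{\beta}{2}\|x\|^2$ makes $U$ coercive, and that a continuous coercive function attains a global minimum; your explicit bound $C$ (with absolute values to handle negative $\lambda_A$) and your compact sublevel-set Weierstrass argument fill in the step the paper compresses into ``therefore attains a global minimum.''
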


\begin{proof}
Finite products and finite sums of bounded continuous functions are bounded and continuous, so $\mathcal{L}_p(\cdot\mid\mathbf{w})$ is bounded and continuous. The term $\frac{\beta}{2}\|x\|^2$ diverges to $+\infty$ as $\|x\|\to\infty$. Hence $U$ is continuous and coercive on $\mathbb{R}^n$, and therefore attains a global minimum.
\end{proof}

The gradient flow is
\begin{equation}
\frac{dx}{dt}=-\nabla U(x\mid\mathbf{w})=\nabla\mathcal{L}_p(x\mid\mathbf{w})-\beta x.
\label{eq:gradient-flow}
\end{equation}
A discrete inference scheme is
\begin{equation}
x^{(t+1)}=x^{(t)}-\eta\nabla U(x^{(t)}\mid\mathbf{w})
=x^{(t)}+\eta\big(\nabla\mathcal{L}_p(x^{(t)}\mid\mathbf{w})-\beta x^{(t)}\big).
\label{eq:discrete-update}
\end{equation}

\begin{proposition}[Descent]
If $\nabla U(\cdot\mid\mathbf{w})$ is $L_U$-Lipschitz and $0<\eta<2/L_U$, then the update \eqref{eq:discrete-update} satisfies
\begin{equation}
U(x^{(t+1)}\mid\mathbf{w})\le U(x^{(t)}\mid\mathbf{w})-
\eta\left(1-\frac{L_U\eta}{2}\right)\|\nabla U(x^{(t)}\mid\mathbf{w})\|^2.
\end{equation}
\end{proposition}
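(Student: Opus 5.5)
The plan is to use the standard descent lemma for functions with Lipschitz gradient, applied along the segment from $x^{(t)}$ to $x^{(t+1)}$. Write $x=x^{(t)}$, $g=\nabla U(x\mid\mathbf{w})$, and $y=x^{(t+1)}=x-\eta g$; by \eqref{eq:discrete-update} this is exactly the update. The first step establishes the quadratic upper bound
\[
U(y\mid\mathbf{w})\le U(x\mid\mathbf{w})+\nabla U(x\mid\mathbf{w})^\top(y-x)+\frac{L_U}{2}\|y-x\|^2 .
\]
It follows from the fundamental theorem of calculus along $\gamma(s)=x+s(y-x)$, $s\in[0,1]$. Note that $U$ is $C^1$ because $\nabla U$ is assumed to exist and be Lipschitz. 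We have
\[
U(y)-U(x)=\int_0^1\nabla U(\gamma(s))^\top(y-x)\,ds .
\]
Subtract $\nabla U(x)^\top(y-x)$ and bound the remaining integrand by Cauchy--Schwarz and the Lipschitz hypothesis:
\[
\bigl|\bigl(\nabla U(\gamma(s))-\nabla U(x)\bigr)^\top(y-x)\bigr|\le L_U s\|y-x\|^2 .
\]
Integrating gives the factor $\tfrac{L_U}{2}$.

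The second step substitutes $y-x=-\eta g$. The linear term becomes $-\eta\|g\|^2$ and the quadratic term becomes $\tfrac{L_U\eta^2}{2}\|g\|^2$. Together they give
\[
U(x^{(t+1)})\le U(x^{(t)})-\eta\Bigl(1-\tfrac{L_U\eta}{2}\Bigr)\|g\|^2,
\]
which is the claimed inequality. The step-size condition $0<\eta<2/L_U$ is used only to note that the coefficient $\eta(1-L_U\eta/2)$ is strictly positive. Hence the bound is a genuine decrease whenever $g\neq0$, and equality of iterates occurs only at stationary points.

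There is no real obstacle here; the only point requiring care is the first step. We must make sure the integral representation is legitimate, which requires $U$ to be differentiable along the segment; this is implicit in the hypothesis. We must also keep the factor $s$ from the Lipschitz bound so that the constant comes out as $L_U/2$ rather than $L_U$. If one prefers to connect with the earlier results, one can also remark on when the hypothesis holds. For Gaussian lexical fields, $\mathcal{L}_p$ is a finite sum of Gaussians by Proposition~(Gaussian product closure), so its Hessian is bounded on $\mathbb{R}^n$. The Lipschitz hypothesis then holds with $L_U\le\beta+\sup_x\|\nabla^2\mathcal{L}_p(x\mid\mathbf{w})\|$. The proof itself does not need this.
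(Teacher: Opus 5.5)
Your proposal is correct and takes the same approach as the paper, which simply invokes the standard descent inequality for functions with $L_U$-Lipschitz gradient. You additionally derive that inequality via the integral along the segment and then substitute $x^{(t+1)}-x^{(t)}=-\eta\nabla U(x^{(t)}\mid\mathbf{w})$, which is exactly the intended argument.
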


\begin{proof}
This is the standard smooth-objective descent inequality applied to gradient descent on $U$.
\end{proof}

\subsection{Uniqueness and multistability}

SFT should not always force a unique interpretation. Ambiguity and polysemy may correspond to multiple local minima. Still, a uniqueness condition is useful.

\begin{proposition}[Strong-convexity sufficient condition]
Assume $\mathcal{L}_p$ is twice differentiable and there exists $M\ge0$ such that
\[
\nabla^2\mathcal{L}_p(x\mid\mathbf{w})\preceq M I
\quad\text{for all }x\in\mathbb{R}^n.
\]
If $\beta>M$, then $U(\cdot\mid\mathbf{w})$ is $(\beta-M)$-strongly convex and has a unique global minimizer.
\end{proposition}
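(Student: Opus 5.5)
The plan is to compute the Hessian of $U$ from its definition in \eqref{eq:energy} and then use the standard fact that a strongly convex, continuous function on $\mathbb{R}^n$ has exactly one minimizer. Since $\mathcal{L}_p$ is assumed twice differentiable, $U$ is as well, and
\[
\nabla^2 U(x\mid\mathbf{w}) = \beta I - \nabla^2\mathcal{L}_p(x\mid\mathbf{w}).
\]
The hypothesis $\nabla^2\mathcal{L}_p \preceq M I$ then gives $\nabla^2 U(x\mid\mathbf{w}) \succeq (\beta - M) I$ for every $x$. With $m := \beta - M > 0$, this is a uniform lower bound on the Hessian.

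Next we turn the Hessian bound into the first-order strong-convexity inequality. Fix $x,y$ and set $g(s) = U(x + s(y-x)\mid\mathbf{w})$. Then
\[
g''(s) = (y-x)^\top \nabla^2 U\, (y-x) \ge m\|y-x\|^2 .
\]
Integrating twice (Taylor's formula with integral remainder) yields
\[
U(y) \ge U(x) + \nabla U(x)^\top (y-x) + \tfrac{m}{2}\|y-x\|^2 .
\]
This is the definition of $m$-strong convexity, and it is equivalent to convexity of $U - \tfrac{m}{2}\|\cdot\|^2$. This step is the only one needing care, because the Hessian bound holds only in the Loewner order. Restricting to a line makes everything scalar and avoids that difficulty.

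For existence, take $x=0$ in the inequality. This gives $U(y) \ge U(0) - \|\nabla U(0)\|\,\|y\| + \tfrac{m}{2}\|y\|^2 \to \infty$, so $U$ is coercive. Being continuous, it attains a global minimum on a compact sublevel set. We do not use the earlier existence proposition, since it assumed bounded fields, which are not hypotheses here.

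For uniqueness, suppose $x_1 \neq x_2$ are both global minimizers with common value $U^*$. Strong convexity along the segment gives
\[
U\!\left(\tfrac{x_1+x_2}{2}\right) \le U^* - \tfrac{m}{8}\|x_1 - x_2\|^2 < U^*,
\]
which is a contradiction. Equivalently, apply the first-order inequality at a minimizer $x_1$, where $\nabla U(x_1)=0$, to get $U(x_2) \ge U^* + \tfrac{m}{2}\|x_2 - x_1\|^2$. Either way the minimizer is unique. The main obstacle is only the Hessian-to-strong-convexity step; everything else is routine.
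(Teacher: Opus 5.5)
Your proof is correct and follows the same route as the paper. Both compute $\nabla^2 U=\beta I-\nabla^2\mathcal{L}_p\succeq(\beta-M)I$, conclude $(\beta-M)$-strong convexity, and deduce a unique global minimizer; you fill in the standard steps the paper only cites, and deriving coercivity from strong convexity itself is slightly cleaner than the paper's bare appeal to coercivity, which would otherwise rest on the earlier existence proposition's boundedness hypothesis that is not assumed here.
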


\begin{proof}
Since $\nabla^2 U(x\mid\mathbf{w})=\beta I-\nabla^2\mathcal{L}_p(x\mid\mathbf{w})$, the assumed bound implies
\[
\nabla^2 U(x\mid\mathbf{w})\succeq(\beta-M)I.
\]
Thus $U$ is strongly convex. A strongly convex coercive function has a unique global minimizer.
\end{proof}

When $\beta\le M$, several minima may exist. Rather than being a defect, this can model multiple stabilized readings. The energy gap between minima and the volume of their basins become semantic diagnostics.

\subsection{Stability under perturbation}

Let $U$ and $\widehat U$ be two sentence-conditioned energies, for example produced by two nearby parameter settings or two paraphrastic sentences. Suppose both are $\mu$-strongly convex with minimizers $x^*$ and $\widehat x^*$.

\begin{proposition}[Perturbation stability]
If
\[
\sup_x\|\nabla U(x)-\nabla\widehat U(x)\|\le \varepsilon
\]
and $\widehat U$ is $\mu$-strongly convex, then
\begin{equation}
\|x^*-\widehat x^*\|\le \frac{\varepsilon}{\mu}.
\end{equation}
\end{proposition}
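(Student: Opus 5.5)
The plan is to use first-order optimality together with the strong monotonicity of $\nabla\widehat U$, which is the gradient-level form of $\mu$-strong convexity. Assume, as in the surrounding setting, that both energies are differentiable on $\mathbb{R}^n$ (for instance under the hypotheses of the strong-convexity proposition). The minimizer $x^*$ of $U$ is then a stationary point, so $\nabla U(x^*)=0$. Likewise the minimizer $\widehat x^*$ of $\widehat U$, which exists and is unique by strong convexity, satisfies $\nabla\widehat U(\widehat x^*)=0$. Only the strong convexity of $\widehat U$ will be needed; that of $U$ matters only to guarantee that $x^*$ is well defined.

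First I record the monotonicity inequality for a differentiable $\mu$-strongly convex function. Write the strong-convexity lower bound
\[
\widehat U(y)\ge \widehat U(x)+\nabla\widehat U(x)^\top(y-x)+\tfrac{\mu}{2}\|y-x\|^2 ,
\]
and the same inequality with $x$ and $y$ exchanged. Adding the two gives
\[
\big(\nabla\widehat U(x)-\nabla\widehat U(y)\big)^\top(x-y)\ge \mu\|x-y\|^2 .
\]
Next I apply this with $x=x^*$ and $y=\widehat x^*$. Since $\nabla\widehat U(\widehat x^*)=0$, this yields $\nabla\widehat U(x^*)^\top(x^*-\widehat x^*)\ge\mu\|x^*-\widehat x^*\|^2$.

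I then use the optimality of $x^*$ for $U$ to write $\nabla\widehat U(x^*)=\nabla\widehat U(x^*)-\nabla U(x^*)$. By the uniform gradient bound this vector has norm at most $\varepsilon$. Cauchy--Schwarz on the left-hand side gives $\mu\|x^*-\widehat x^*\|^2\le\varepsilon\|x^*-\widehat x^*\|$. If $x^*=\widehat x^*$ there is nothing to prove. Otherwise, dividing by $\|x^*-\widehat x^*\|$ gives $\|x^*-\widehat x^*\|\le\varepsilon/\mu$.

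The main obstacle is making sure differentiability and stationarity are legitimate. The statement only says $U$ and $\widehat U$ are strongly convex, not that they are smooth. If one wanted to avoid assuming differentiability, I would replace gradients by subgradients: use $0\in\partial\widehat U(\widehat x^*)$ and the strong monotonicity of $\partial\widehat U$. However, the hypothesis $\sup_x\|\nabla U-\nabla\widehat U\|\le\varepsilon$ already presupposes gradients. So I will simply state that both energies are differentiable, which is consistent with the twice-differentiable setting of the previous proposition.
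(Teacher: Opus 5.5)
Your proof is correct and follows the paper's argument: stationarity at both minimizers combined with the strong monotonicity of $\nabla\widehat U$. The only difference is that you derive strong monotonicity from the strong-convexity lower bound and finish with Cauchy--Schwarz, whereas the paper directly invokes the norm form $\|\nabla\widehat U(x)-\nabla\widehat U(y)\|\ge\mu\|x-y\|$.
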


\begin{proof}
Because $\nabla U(x^*)=0$ and $\nabla\widehat U(\widehat x^*)=0$,
\[
\|\nabla\widehat U(x^*)-\nabla\widehat U(\widehat x^*)\|
=\|\nabla\widehat U(x^*)\|
=\|\nabla\widehat U(x^*)-\nabla U(x^*)\|\le\varepsilon.
\]
For a $\mu$-strongly convex differentiable function, its gradient is $\mu$-strongly monotone, implying
\[
\|\nabla\widehat U(x^*)-\nabla\widehat U(\widehat x^*)\|\ge \mu\|x^*-\widehat x^*\|.
\]
Combining the inequalities gives the result.
\end{proof}

This proposition gives a principled sense in which stabilized interpretations vary continuously with small field perturbations when the energy basin is well conditioned.

\subsection{Probabilistic interpretation}

For temperature $\tau>0$, define
\begin{equation}
p_\tau(x\mid\mathbf{w})=Z(\mathbf{w})^{-1}\exp\!\left(-\frac{U(x\mid\mathbf{w})}{\tau}\right),
\qquad
Z(\mathbf{w})=\int_{\mathbb{R}^n}\exp\!\left(-\frac{U(x\mid\mathbf{w})}{\tau}\right)dx.
\end{equation}
Because $U$ is coercive under the conditions above, $Z(\mathbf{w})$ is finite for Gaussian-field instantiations. The zero-temperature limit emphasizes global minimizers, while finite temperature represents uncertainty, ambiguity, and graded interpretive alternatives. This connects SFT to energy-based modeling \citep{lecun2006} without identifying linguistic meaning with the energy model itself.

\section{Learning and computational instantiation}
\label{sec:learning}

Let $\Theta$ denote lexical, contextual deformation, interaction, and energy parameters. Given supervised data
\[
\mathcal{D}=\{(\mathbf{w}^{(r)},y^{(r)})\}_{r=1}^N,
\]
define $x^*(\mathbf{w})$ by $T$ unrolled steps of \eqref{eq:discrete-update} or by an implicit optimization layer. A prediction head $F_\psi$ consumes $x^*(\mathbf{w})$, the order spectrum, or energy-derived features. A general training objective is
\begin{equation}
\mathcal{J}(\Theta,\psi)=\sum_{r=1}^N\ell\big(F_\psi(\chi(\mathbf{w}^{(r)})),y^{(r)}\big)
+\alpha\Omega_{\Sigma}(\Theta)+\rho\Omega_{\mathrm{int}}(\Theta)+\zeta\Omega_{\mathrm{stab}}(\Theta),
\label{eq:training-objective}
\end{equation}
where
\[
\chi(\mathbf{w})=\big[x^*(\mathbf{w});\operatorname{Spec}_p(\mathbf{w});U(x^*(\mathbf{w})\mid\mathbf{w})\big].
\]
The regularizer $\Omega_{\Sigma}$ controls covariance degeneracy, $\Omega_{\mathrm{int}}$ encourages sparse higher-order interactions, and $\Omega_{\mathrm{stab}}$ penalizes unstable inference trajectories or excessively flat minima.

The computational bottleneck is the interaction complex. A complete order-$p$ expansion has $O(m^p)$ terms. A scientifically useful implementation should therefore report not only accuracy but also the selection rule for $\mathcal{K}_p$, the retained interaction density, and the contribution of each order spectrum component.

\section{Relation to transformer representations}
\label{sec:transformers}

Transformers are not SFT, and SFT is not an alternative derivation of the transformer architecture. The relation is one of effective description. A self-attention layer computes
\begin{align}
\alpha_{ij}^{(\ell)}&=\operatorname{softmax}_j\left(\frac{(W_Q^{(\ell)}h_i^{(\ell)})^\top(W_K^{(\ell)}h_j^{(\ell)})}{\sqrt{d_k}}\right),\\
h_i^{(\ell+1)}&=f\left(h_i^{(\ell)},\sum_j\alpha_{ij}^{(\ell)}W_V^{(\ell)}h_j^{(\ell)}\right).
\end{align}
This mechanism performs contextual reweighting and nonlinear recombination over token states \citep{vaswani2017}. Probing studies suggest that transformer representations encode syntactic and semantic regularities, though attention weights should not be naively equated with explanations \citep{clark2019,hewitt2019,tenney2019,jain2019,rogers2020}.

SFT can be placed on top of or alongside such representations in at least three ways.

\paragraph{SFT as a semantic probe.}
A pretrained encoder supplies $q_\phi(\mathbf{w})$ and token features. SFT parameters are fitted while the encoder is frozen. The question is whether explicit field overlap, residual mass, and stabilization features explain behavior beyond standard embedding similarities.

\paragraph{SFT as an interpretable head.}
A downstream model uses $x^*(\mathbf{w})$ and $\operatorname{Spec}_p(\mathbf{w})$ as interpretable features. The model can report which interaction subsets contributed to a decision and whether the decision depends on higher-order residuals.

\paragraph{SFT as a regularizer.}
During fine-tuning, one can encourage stable fields, sparse interaction complexes, or controlled order spectra. This does not force a transformer to become an SFT model; it imposes a semantic bias on a flexible representation learner.

The main conceptual difference is that transformers produce contextual states directly, whereas SFT decomposes semantic organization into type-level fields, contextual deformation, explicit subset interactions, and energy stabilization. This decomposition is the source of SFT's testable claims.

\section{Evaluation strategy and empirical predictions}
\label{sec:evaluation}

A mature follow-up paper must make clear what could count against the theory. SFT becomes scientifically meaningful when it predicts measurable differences between model orders and semantic constructions.

\subsection{Interaction-order ablation}

Train or fit variants with $p=1$, $p=2$, and $p=3$ while holding the encoder and parameter budget as fixed as possible. Evaluate on phrase similarity, sentence similarity, natural-language inference, and controlled idiom/metaphor datasets. Benchmarks such as SICK, SNLI, MultiNLI, GLUE, and SuperGLUE provide starting points for such comparisons \citep{marelli2014,bowman2015,williams2018,wang2018,wang2019}. The key prediction is not simply that higher order improves performance; flexible models often improve when more parameters are added. The stronger prediction is that $R_3$ should selectively explain cases involving non-additive joint construal.

\subsection{Residual diagnostics}

Construct minimal triples such as literal controls, metaphoric triples, coercion triples, and idiomatic triples. For each sentence, compute $R_1,R_2,R_3$ and compare the normalized third-order ratio
\begin{equation}
\rho_3(\mathbf{w})=\frac{R_3(\mathbf{w})}{R_1(\mathbf{w})+R_2(\mathbf{w})+R_3(\mathbf{w})+\epsilon}.
\end{equation}
SFT predicts that $\rho_3$ should be higher for constructions whose interpretation depends on joint three-way interaction than for matched literal controls.

\subsection{Field-overlap predictions}

The compatibility factor $\kappa_A$ and overlap integrals give predictions for lexical substitution, priming, and semantic acceptability. If two words have high field overlap but differ in covariance, they may be near in center-based similarity but behave differently under interaction. This supplies a test that distinguishes field semantics from point-vector semantics.

\subsection{Stabilization diagnostics}

Energy-based interpretation yields additional observables: convergence time, gradient norm decay, energy gaps, basin sensitivity, and finite-temperature entropy. Sentences with unstable or ambiguous readings should exhibit flatter minima, smaller energy gaps, or multiple local attractors. These diagnostics can be compared against human ambiguity judgments or model uncertainty.

\subsection{Multilingual structure}

If translation equivalents share partial field geometry but differ in covariance and interaction coefficients, multilingual encoders should permit partial alignment of centers while preserving language-specific deformation patterns. This predicts a middle position between strict semantic identity across languages and complete incommensurability.

\section{Scope and limitations}
\label{sec:limits}

SFT is a formal hypothesis, not a completed semantics. Several limitations are essential.

First, the semantic space $S$ is model-dependent. Different encoders, training corpora, and supervision regimes may induce different geometries. SFT therefore requires identifiability analysis: which field parameters are stable under reparameterization, and which are artifacts of the coordinate system?

Second, higher-order interaction is expensive. A full third-order model is already cubic in sentence length. Any practical implementation must use sparsity, syntax, local windows, or learned pruning. Such approximations are not merely engineering details; they determine which interactions the theory can detect.

Third, SFT does not solve grounding or normativity. It can describe stable representational regularities induced by language use, but it does not replace the public practices through which meanings are taught, contested, and corrected.

Fourth, relation to transformers remains methodological rather than identificatory. A transformer may contain patterns that can be approximated by SFT, but no attention head or hidden layer should be declared a semantic field without explicit modeling and validation.

Fifth, the strong philosophical phrase ``intrinsic mathematical structure'' should be treated as a research hypothesis. The scientifically safer formulation is that language use induces mathematically tractable regularities at a representational level. The stronger metaphysical reading is not required for the model class.

\section{Conclusion}

This paper has reconstructed Semantic Field Theory as an evolving formal program. The historical origin lies in a philosophical critique of strong anti-formalist interpretations of language games. The underlying intuition of lexical semantic fields was present from the beginning, although only later reformulated in explicit mathematical and computational terms. The contemporary mathematical form is a model class in which lexical items are lifted to distributed fields, contexts deform those fields, utterances activate subset-indexed interaction complexes, irreducible higher-order effects are isolated by Mobius residuals, and interpretation is computed as stabilization in an energy landscape.

The paper's main scientific claim is limited but testable: a tractable level of semantic organization may be captured by field geometry, higher-order residual structure, and energy-based inference. The Gaussian product closure result makes interaction geometry explicit. The order spectrum turns the three-word problem into a measurable diagnostic. The energy formulation gives SFT an implementable inference procedure with existence and stability guarantees. The worked summer-day example illustrates how this formal apparatus can be reduced to a minimal executable pipeline without confusing the toy instantiation with a trained semantic model.

The resulting theory does not deny that meaning is public, social, and normative. Instead, it proposes that public language use can leave stable mathematical structure without being reducible to that structure. In that sense, SFT aims to occupy a middle position: stronger than metaphorical talk about semantic fields, weaker than a complete reduction of language to vectors, and sufficiently formal to be estimated, ablated, and criticized.

\section*{Acknowledgments}

The author thanks TWT GmbH Science \& Innovation and NIKI Digital Engineering for support. He also
thanks S. Katsioli for
helpful discussions.

\appendix

\section{Supplementary calculations}

\subsection{Gradients of Gaussian fields}

For a deformed Gaussian field
\[
\widetilde L_i(x)=\tilde a_i\exp\left[-\frac12(x-\tilde c_i)^\top P_i(x-\tilde c_i)\right],
\]
where $P_i=\widetilde\Sigma_i^{-1}$,
\begin{equation}
\nabla\widetilde L_i(x)=-\widetilde L_i(x)P_i(x-\tilde c_i).
\end{equation}
For an interaction term $\Phi_A(x)=\lambda_A\prod_{i\in A}\widetilde L_i(x)$,
\begin{equation}
\nabla\Phi_A(x)=\Phi_A(x)\sum_{i\in A}\big[-P_i(x-\tilde c_i)\big].
\end{equation}
Thus
\begin{equation}
\nabla U(x\mid\mathbf{w})=\beta x-\sum_{A\in\mathcal{K}_p(\mathbf{w})}\Phi_A(x\mid\mathbf{w})\sum_{i\in A}\big[-P_i(x-\tilde c_i)\big].
\end{equation}

\subsection{Closed-form pairwise values for the summer-day example}

For the isotropic summer-day example in Section~\ref{sec:example}, pairwise interaction tensions reduce to
\begin{equation}
T_{ij}=\frac12\|c_i-c_j\|^2,
\qquad
\kappa_{ij}=\exp(-T_{ij}/2).
\end{equation}
The resulting approximate values are
\begin{center}
\begin{tabular}{@{}lcc@{}}
\toprule
Pair & $T_{ij}$ & $\kappa_{ij}$ \\
\midrule
helios--thalassa & $0.1375$ & $0.9336$ \\
helios--zesti & $0.0357$ & $0.9823$ \\
thalassa--zesti & $0.2667$ & $0.8752$ \\
\bottomrule
\end{tabular}
\end{center}
These numbers are not linguistic measurements. They are sanity checks for the toy field geometry: the pair \emph{helios--zesti} is closest in thermal intensity, while \emph{thalassa--zesti} has greater tension because the marine and heat dimensions pull in different directions.

\subsection{\texorpdfstring{Closed-form $L^2$ norm for Gaussian interactions}{Closed-form L2 norm for Gaussian interactions}}

If
\[
f_A(x)=C_A\exp\left[-\frac12(x-\mu_A)^\top P_A(x-\mu_A)\right],
\]
with $P_A\succ0$, then
\begin{equation}
\|f_A\|_{L^2(\mathbb{R}^n)}^2
=C_A^2\int_{\mathbb{R}^n}\exp\left[-(x-\mu_A)^\top P_A(x-\mu_A)\right]dx
=C_A^2\pi^{n/2}|P_A|^{-1/2}.
\end{equation}
This permits exact residual norms when residuals consist of single Gaussian interaction terms. When residuals are sums of Gaussian terms, pairwise cross-integrals also have closed forms by the same product rule.

\subsection{Algorithmic schema}

\noindent\textbf{Input:} sentence $\mathbf{w}=(w_1,\ldots,w_m)$; order $p$; parameters $\Theta$; step size $\eta$; tolerance $\epsilon$.\\
\textbf{Output:} stabilized state $x^*(\mathbf{w})$, order spectrum $\operatorname{Spec}_p(\mathbf{w})$.

\begin{enumerate}
\item Map each type $t_i$ to base field parameters $(a_{t_i},c_{t_i},\Sigma_{t_i})$.
\item Compute contextual representation $q_\phi(\mathbf{w})$.
\item Deform each lexical field to $\widetilde L_i(\cdot\mid\mathbf{w})$.
\item Select interaction complex $\mathcal{K}_p(\mathbf{w})$.
\item Compute coefficients $\lambda_A(\mathbf{w})$ for $A\in\mathcal{K}_p(\mathbf{w})$.
\item Construct $\mathcal L_p(x\mid\mathbf{w})=\sum_A\Phi_A(x\mid\mathbf{w})$ and energy $U(x\mid\mathbf{w})$.
\item Initialize $x^{(0)}$ by a pooled center, random restart, or encoder projection.
\item Iterate $x^{(t+1)}=x^{(t)}-\eta\nabla U(x^{(t)}\mid\mathbf{w})$ until convergence.
\item Compute residuals $\Delta_A$ and order spectrum $\operatorname{Spec}_p(\mathbf{w})$.
\item Return $x^*(\mathbf{w})=x^{(t+1)}$ and diagnostics.
\end{enumerate}


\begin{thebibliography}{}

\bibitem[Bender and Koller(2020)]{bender2020}
Emily M. Bender and Alexander Koller. 2020.
\newblock Climbing towards NLU: On meaning, form, and understanding in the age of data.
\newblock In \emph{Proceedings of the 58th Annual Meeting of the Association for Computational Linguistics}, pages 5185--5198.

\bibitem[Bender et~al.(2021)Bender, Gebru, McMillan-Major, and Shmitchell]{bender2021}
Emily M. Bender, Timnit Gebru, Angelina McMillan-Major, and Shmargaret Shmitchell. 2021.
\newblock On the dangers of stochastic parrots: Can language models be too big?
\newblock In \emph{Proceedings of the 2021 ACM Conference on Fairness, Accountability, and Transparency}, pages 610--623.

\bibitem[Bommasani et~al.(2021)Bommasani, Hudson, Adeli, Altman, Arora, von Arx, Bernstein, Bohg, Bosselut, Brunskill et~al.]{bommasani2021}
Rishi Bommasani, Drew A. Hudson, Ehsan Adeli, Russ Altman, Simran Arora, Sydney von Arx, Michael S. Bernstein, Jeannette Bohg, Anja Butterfield, and many others. 2021.
\newblock On the opportunities and risks of foundation models.
\newblock \emph{arXiv preprint}, arXiv:2108.07258.

\bibitem[Bowman et~al.(2015)Bowman, Angeli, Potts, and Manning]{bowman2015}
Samuel R. Bowman, Gabor Angeli, Christopher Potts, and Christopher D. Manning. 2015.
\newblock A large annotated corpus for learning natural language inference.
\newblock In \emph{Proceedings of EMNLP}, pages 632--642.

\bibitem[Brown et~al.(2020)Brown, Mann, Ryder, Subbiah, Kaplan, Dhariwal, Neelakantan, Shyam, Sastry, Askell et~al.]{brown2020}
Tom B. Brown, Benjamin Mann, Nick Ryder, Melanie Subbiah, Jared Kaplan, Prafulla Dhariwal, Arvind Neelakantan, Pranav Shyam, Girish Sastry, Amanda Askell, and others. 2020.
\newblock Language models are few-shot learners.
\newblock In \emph{Advances in Neural Information Processing Systems}, 33:1877--1901.

\bibitem[Clark et~al.(2019)Clark, Khandelwal, Levy, and Manning]{clark2019}
Kevin Clark, Urvashi Khandelwal, Omer Levy, and Christopher D. Manning. 2019.
\newblock What does BERT look at? An analysis of BERT's attention.
\newblock In \emph{Proceedings of the 2019 ACL Workshop BlackboxNLP}, pages 276--286.

\bibitem[Coecke et~al.(2010)Coecke, Sadrzadeh, and Clark]{coecke2010}
Bob Coecke, Mehrnoosh Sadrzadeh, and Stephen Clark. 2010.
\newblock Mathematical foundations for a compositional distributional model of meaning.
\newblock \emph{Linguistic Analysis}, 36(1--4):345--384.

\bibitem[Devlin et~al.(2019)Devlin, Chang, Lee, and Toutanova]{devlin2019}
Jacob Devlin, Ming-Wei Chang, Kenton Lee, and Kristina Toutanova. 2019.
\newblock BERT: Pre-training of deep bidirectional transformers for language understanding.
\newblock In \emph{Proceedings of NAACL-HLT}, pages 4171--4186.

\bibitem[Ethayarajh(2019)]{ethayarajh2019}
Kawin Ethayarajh. 2019.
\newblock How contextual are contextualized word representations? Comparing the geometry of BERT, ELMo, and GPT-2 embeddings.
\newblock In \emph{Proceedings of EMNLP-IJCNLP}, pages 55--65.

\bibitem[Firth(1957)]{firth1957}
J. R. Firth. 1957.
\newblock \emph{Papers in Linguistics 1934--1951}.
\newblock Oxford University Press.

\bibitem[Grefenstette et~al.(2011)Grefenstette, Sadrzadeh, Clark, Coecke, and Pulman]{grefenstette2011}
Edward Grefenstette, Mehrnoosh Sadrzadeh, Stephen Clark, Bob Coecke, and Stephen Pulman. 2011.
\newblock Concrete sentence spaces for compositional distributional models of meaning.
\newblock In \emph{Proceedings of IWCS 2011}.

\bibitem[Harris(1954)]{harris1954}
Zellig S. Harris. 1954.
\newblock Distributional structure.
\newblock \emph{WORD}, 10(2--3):146--162.

\bibitem[Hewitt and Manning(2019)]{hewitt2019}
John Hewitt and Christopher D. Manning. 2019.
\newblock A structural probe for finding syntax in word representations.
\newblock In \emph{Proceedings of NAACL-HLT}, pages 4129--4138.

\bibitem[Jain and Wallace(2019)]{jain2019}
Sarthak Jain and Byron C. Wallace. 2019.
\newblock Attention is not explanation.
\newblock In \emph{Proceedings of NAACL-HLT}, pages 3543--3556.

\bibitem[Kaplan et~al.(2020)Kaplan, McCandlish, Henighan, Brown, Chess, Child, Gray, Radford, Wu, and Amodei]{kaplan2020}
Jared Kaplan, Sam McCandlish, Tom Henighan, Tom B. Brown, Benjamin Chess, Rewon Child, Scott Gray, Alec Radford, Jeffrey Wu, and Dario Amodei. 2020.
\newblock Scaling laws for neural language models.
\newblock \emph{arXiv preprint}, arXiv:2001.08361.

\bibitem[Kripke(1982)]{kripke1982}
Saul A. Kripke. 1982.
\newblock \emph{Wittgenstein on Rules and Private Language: An Elementary Exposition}.
\newblock Harvard University Press.

\bibitem[LeCun et~al.(2006)LeCun, Chopra, Hadsell, Ranzato, and Huang]{lecun2006}
Yann LeCun, Sumit Chopra, Raia Hadsell, Marc'Aurelio Ranzato, and Fu-Jie Huang. 2006.
\newblock A tutorial on energy-based learning.
\newblock In G. Bakir, T. Hofmann, B. Scholkopf, A. Smola, B. Taskar, and S. Vishwanathan, editors, \emph{Predicting Structured Data}, MIT Press.

\bibitem[Marelli et~al.(2014)Marelli, Menini, Baroni, Bentivogli, Bernardi, and Zamparelli]{marelli2014}
Marco Marelli, Stefano Menini, Marco Baroni, Luisa Bentivogli, Raffaella Bernardi, and Roberto Zamparelli. 2014.
\newblock A SICK cure for the evaluation of compositional distributional semantic models.
\newblock In \emph{Proceedings of LREC}, pages 216--223.

\bibitem[Mikolov et~al.(2013)Mikolov, Yih, and Zweig]{mikolov2013}
Tomas Mikolov, Wen-tau Yih, and Geoffrey Zweig. 2013.
\newblock Linguistic regularities in continuous space word representations.
\newblock In \emph{Proceedings of NAACL-HLT}, pages 746--751.

\bibitem[Mitchell and Lapata(2008)]{mitchell2008}
Jeff Mitchell and Mirella Lapata. 2008.
\newblock Vector-based models of semantic composition.
\newblock In \emph{Proceedings of ACL-08: HLT}, pages 236--244.

\bibitem[Mitchell and Lapata(2010)]{mitchell2010}
Jeff Mitchell and Mirella Lapata. 2010.
\newblock Composition in distributional models of semantics.
\newblock \emph{Cognitive Science}, 34(8):1388--1429.

\bibitem[Pennington et~al.(2014)Pennington, Socher, and Manning]{pennington2014}
Jeffrey Pennington, Richard Socher, and Christopher D. Manning. 2014.
\newblock GloVe: Global vectors for word representation.
\newblock In \emph{Proceedings of EMNLP}, pages 1532--1543.

\bibitem[Peters et~al.(2018)Peters, Neumann, Iyyer, Gardner, Clark, Lee, and Zettlemoyer]{peters2018}
Matthew E. Peters, Mark Neumann, Mohit Iyyer, Matt Gardner, Christopher Clark, Kenton Lee, and Luke Zettlemoyer. 2018.
\newblock Deep contextualized word representations.
\newblock In \emph{Proceedings of NAACL-HLT}, pages 2227--2237.

\bibitem[Rogers et~al.(2020)Rogers, Kovaleva, and Rumshisky]{rogers2020}
Anna Rogers, Olga Kovaleva, and Anna Rumshisky. 2020.
\newblock A primer in BERTology: What we know about how BERT works.
\newblock \emph{Transactions of the Association for Computational Linguistics}, 8:842--866.

\bibitem[Rota(1964)]{rota1964}
Gian-Carlo Rota. 1964.
\newblock On the foundations of combinatorial theory I. Theory of Mobius functions.
\newblock \emph{Zeitschrift fuer Wahrscheinlichkeitstheorie und Verwandte Gebiete}, 2:340--368.

\bibitem[Tenney et~al.(2019)Tenney, Das, and Pavlick]{tenney2019}
Ian Tenney, Dipanjan Das, and Ellie Pavlick. 2019.
\newblock BERT rediscovers the classical NLP pipeline.
\newblock In \emph{Proceedings of ACL}, pages 4593--4601.

\bibitem[Turney and Pantel(2010)]{turney2010}
Peter D. Turney and Patrick Pantel. 2010.
\newblock From frequency to meaning: Vector space models of semantics.
\newblock \emph{Journal of Artificial Intelligence Research}, 37:141--188.

\bibitem[Vartziotis(2012)]{vartziotis2012}
Dimitris Vartziotis. 2012.
\newblock \emph{Scholia se stochasmous tou Ludwig Wittgenstein [Scholia on Reflections of Ludwig Wittgenstein]}.
\newblock Lefki Selida.

\bibitem[Vartziotis(2017)]{vartziotis2017}
Dimitris Vartziotis. 2017.
\newblock \emph{Kommentare zu Wittgensteins Zitaten}.
\newblock Literareon -- Utz Verlag.

\bibitem[Vartziotis(in press-a)]{vartziotisUpcoming1}
Dimitris Vartziotis.
In press.
\newblock \emph{Wittgenstein and the End of Language Games: Mathematical Semantics, Field Theory, and the Age of Predictive Language Models}.
\newblock Literareon -- Utz Verlag.

\bibitem[Vartziotis(in press-b)]{vartziotisUpcoming2}
Dimitris Vartziotis.
In press.
\newblock \emph{The Revolution of LLMs: Philosophical and Mathematical Conjugations}.
\newblock Literareon -- Utz Verlag.

\bibitem[Vartziotis(2026)]{vartziotis2026a}
Dimitris Vartziotis. 2026.
\newblock Language as mathematical structure: Examining Semantic Field Theory against language games.
\newblock \emph{arXiv preprint}, arXiv:2601.00448.

\bibitem[Vaswani et~al.(2017)Vaswani, Shazeer, Parmar, Uszkoreit, Jones, Gomez, Kaiser, and Polosukhin]{vaswani2017}
Ashish Vaswani, Noam Shazeer, Niki Parmar, Jakob Uszkoreit, Llion Jones, Aidan N. Gomez, Lukasz Kaiser, and Illia Polosukhin. 2017.
\newblock Attention is all you need.
\newblock In \emph{Advances in Neural Information Processing Systems}, 30:5998--6008.

\bibitem[Wang et~al.(2018)Wang, Singh, Michael, Hill, Levy, and Bowman]{wang2018}
Alex Wang, Amanpreet Singh, Julian Michael, Felix Hill, Omer Levy, and Samuel R. Bowman. 2018.
\newblock GLUE: A multi-task benchmark and analysis platform for natural language understanding.
\newblock In \emph{Proceedings of the BlackboxNLP Workshop}, pages 353--355.

\bibitem[Wang et~al.(2019)Wang, Pruksachatkun, Nangia, Singh, Michael, Hill, Levy, and Bowman]{wang2019}
Alex Wang, Yada Pruksachatkun, Nikita Nangia, Amanpreet Singh, Julian Michael, Felix Hill, Omer Levy, and Samuel R. Bowman. 2019.
\newblock SuperGLUE: A stickier benchmark for general-purpose language understanding systems.
\newblock In \emph{Advances in Neural Information Processing Systems}, 32.

\bibitem[Williams et~al.(2018)Williams, Nangia, and Bowman]{williams2018}
Adina Williams, Nikita Nangia, and Samuel R. Bowman. 2018.
\newblock A broad-coverage challenge corpus for sentence understanding through inference.
\newblock In \emph{Proceedings of NAACL-HLT}, pages 1112--1122.

\bibitem[Wittgenstein(1953)]{wittgenstein1953}
Ludwig Wittgenstein. 1953.
\newblock \emph{Philosophical Investigations}.
\newblock Blackwell.

\end{thebibliography}
\end{document}